\documentclass[11pt]{article}
\usepackage{comment}
\usepackage[margin=1in]{geometry}
\usepackage[T1]{fontenc}
\usepackage{lmodern}
\usepackage{microtype}
\usepackage{amsmath,amssymb,amsthm,mathtools}
\usepackage{authblk}
\usepackage[colorlinks=true,linkcolor=blue,citecolor=blue,urlcolor=blue]{hyperref}

\newtheorem{theorem}{Theorem}
\newtheorem{proposition}{Proposition}
\newtheorem{lemma}{Lemma}
\theoremstyle{remark}

\newcommand{\X}{\mathcal{X}}
\newcommand{\Y}{\mathcal{Y}}
\newcommand{\F}{\mathcal{F}}
\newcommand{\G}{\mathcal{G}}
\newcommand{\E}{\mathbb{E}}
\newcommand{\Prb}{\mathbb{P}}
\newcommand{\one}{\mathbf{1}}
\newcommand{\epsagn}{\varepsilon_{\mathrm{agn}}}
\newcommand{\TV}{\operatorname{TV}}
\newcommand{\KL}{\operatorname{KL}}
\newcommand{\Ber}{\operatorname{Ber}}
\newcommand{\hamming}{d_{\mathrm H}}
\DeclareMathOperator*{\argmax}{arg\,max}

\hypersetup{
  pdftitle={A Rate Separation for Agnostic Direct Sums},
  pdfauthor={Mihir More, Aritra Das, and Debayan Gupta},
  pdfsubject={Agnostic PAC learning curves under direct sums}
}

\title{A Rate Separation for Agnostic Direct Sums}
\author[1]{Mihir More}
\author[1]{Aritra Das}
\author[1]{Debayan Gupta}
\affil[1]{Truth Audit Labs}
\date{}

\begin{document}
\maketitle

\begin{abstract}
Hanneke, Moran, and Waknine \cite{HannekeMoranWaknine2024} asked how the agnostic PAC learning curve of the direct sum $C^r$ depends on the single-instance learning curve $\epsagn(n\mid C)$ and on $r$. We show that the single-instance learning rate does not determine the direct-sum rate. Let $\F$ be the class of the two constant binary functions and let $\G$ consist of the zero function and the identity function. Both classes have agnostic learning curve of order $n^{-1/2}$. 
\end{abstract}

\section{Introduction}

Let $C_1\subseteq \Y_1^{\X_1}$ and $C_2\subseteq \Y_2^{\X_2}$ be concept classes. Following Hanneke, Moran, and Waknine~\cite{HannekeMoranWaknine2024}, their direct sum $C_1\otimes C_2$ is the class of functions
\[
  (c_1\otimes c_2)(x_1,x_2)
  =\bigl(c_1(x_1),c_2(x_2)\bigr),
  \qquad c_i\in C_i.
\]
For a class $C$, its $r$-fold direct sum is denoted by
\[
  C^r=\bigotimes_{i=1}^r C.
\]

The agnostic learning curve measures excess risk relative to the best member of $C^r$. An upper bound on the risk of a product hypothesis does not directly control this excess risk. This is noted in~\cite{HannekeMoranWaknine2024}.
We give a negative answer to the rate-level form of the question,  the order of $\epsagn(n\mid C)$ and the number of factors do not determine the order of $\epsagn(n\mid C^r)$. 
To the best of our knowledge, no subsequent paper explicitly resolves this question in its unrestricted distribution-free form. Suruga~\cite{Suruga2024} proves direct-sum theorems in a different complexity framework, and Holzman, Moran, and Shlimovich~\cite{HolzmanMoranShlimovich2026} study uniform laws of large numbers in product spaces under structural assumptions on the distribution. Recent work on agnostic multiclass learning gives class-specific sample-complexity bounds in terms of combinatorial dimensions~\cite{CohenEtAl2025,Pabbaraju2026}. These results are consistent with the separation proved here but do not express the direct-sum curve as a function of the single-instance learning curve alone.

\section{Preliminaries}\label{sec:preliminaries}

We use the setup and notation of~\cite{HannekeMoranWaknine2024}. Let $\X$ be a domain, let $\Y$ be a label space, and let $k\geq 1$. Write
\[
  \binom{\Y}{k}=\{B\subseteq \Y: |B|=k\}.
\]
A $k$-list function is a map $c:\X\to\binom{\Y}{k}$, and a $k$-list concept class is a set
\[
  C\subseteq \binom{\Y}{k}^{\X}.
\]
A $k$-list learning rule is a map
\[
  A:(\X\times\Y)^*\longrightarrow \binom{\Y}{k}^{\X}.
\]
For a distribution $D$ on $\X\times\Y$, the population loss of a $k$-list function $c$ is
\begin{equation}\label{eq:list-loss}
  L_D(c)
  =\E_{(x,y)\sim D}\bigl[\one\{y\notin c(x)\}\bigr].
\end{equation}

When $k=1$, we identify a singleton $\{y\}$ with its unique element $y$. Under this identification, a $1$-list function is an ordinary function $c:\X\to\Y$, and a $1$-list concept class is an ordinary concept class $C\subseteq\Y^{\X}$. This is precisely the case considered in ~\cite{HannekeMoranWaknine2024}. We therefore set $k=1$ from this point onward. The loss in~\eqref{eq:list-loss} becomes
\[
  L_D(h)=\E_{(x,y)\sim D}\bigl[\one\{h(x)\neq y\}\bigr].
\]
For $C\subseteq\Y^{\X}$, write
\[
  L_D(C)=\inf_{c\in C}L_D(c).
\]
If $A$ is a learning rule and $S\sim D^n$, define
\[
  L_{D,n}(A)=\E_{S\sim D^n}\bigl[L_D(A(S))\bigr].
\]
The agnostic PAC learning curve is
\begin{equation}\label{eq:agnostic-curve}
  \epsagn(n\mid C)
  =\inf_A\sup_D\bigl(L_{D,n}(A)-L_D(C)\bigr),
\end{equation}
where the supremum is over all distributions on $\X\times\Y$. The learning rule is not required to output a member of $C$.

For concept classes $C_i\subseteq\Y_i^{\X_i}$, their direct sum is
\[
  C_1\otimes C_2
  =\{c_1\otimes c_2:c_i\in C_i\},
  \qquad
  (c_1\otimes c_2)(x_1,x_2)
  =\bigl(c_1(x_1),c_2(x_2)\bigr).
\]
For $r\geq 1$, we write $C^r=\bigotimes_{i=1}^r C$. Its domain is $\X^r$, its label space is $\Y^r$, and its loss is zero-one loss on the full vector:
\[
  L_D(h)=\Prb_{(x,y)\sim D}\bigl(h(x)\neq y\bigr).
\]
In particular, this is not coordinatewise Hamming loss.

We write $a_{n,r}\asymp b_{n,r}$ if there are universal constants $c,C>0$ such that
\[
  cb_{n,r}\leq a_{n,r}\leq Cb_{n,r}
\]
for all relevant $n$ and $r$.

We use four standard facts. First, empirical risk minimization over a finite class $H$ satisfies
\begin{equation}\label{eq:finite-class}
  \epsagn(n\mid H)
  \leq K\min\left\{1,\sqrt{\frac{\log |H|}{n}}\right\}
\end{equation}
for a universal constant $K$; see~\cite[Chapter~12]{DevroyeGyorfiLugosi1996}. Second, Le Cam's two-point inequality states that, for distributions $P_+$ and $P_-$ and any test $\widehat\sigma$ taking values in $\{-1,+1\}$,
\begin{equation}\label{eq:le-cam}
  \max_{\sigma\in\{-1,+1\}}
  P_\sigma(\widehat\sigma\neq\sigma)
  \geq \frac{1-\TV(P_+,P_-)}{2};
\end{equation}
see~\cite[Chapter~2]{Tsybakov2009}. Third, Pinsker's inequality gives
\begin{equation}\label{eq:pinsker}
  \TV(P,Q)\leq \sqrt{\frac12\KL(P\|Q)}.
\end{equation}
Finally, we use the following standard form of Assouad's lemma. For $\theta\in\{-1,+1\}^r$, let $\theta^{(j)}$ be obtained from $\theta$ by changing the sign of its $j$th coordinate, and define
\[
  \hamming(a,b)=\sum_{j=1}^r\one\{a_j\neq b_j\}.
\]

\begin{lemma}[Assouad's lemma; \cite{Assouad1983,Tsybakov2009}]\label{lem:assouad}
Let $\{P_\theta:\theta\in\{-1,+1\}^r\}$ be a family of distributions on a common measurable space. If
\[
  \TV(P_\theta,P_{\theta^{(j)}})\leq \eta
\]
for every $\theta$ and $j$, then every estimator $\widehat\theta$ with values in $\{-1,+1\}^r$ satisfies
\[
  \sup_{\theta\in\{-1,+1\}^r}
  \E_\theta\bigl[\hamming(\widehat\theta,\theta)\bigr]
  \geq \frac r2(1-\eta).
\]
\end{lemma}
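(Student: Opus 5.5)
We prove the lemma by reducing the Hamming risk to a sum of $r$ separate two-point problems. Each of these is then handled by the standard Le Cam bound, after which we average over the cube. Write $\Theta=\{-1,+1\}^r$. Since a supremum dominates an average, it suffices to lower bound the uniform average
\[
  \frac{1}{2^r}\sum_{\theta\in\Theta}\E_\theta\bigl[\hamming(\widehat\theta,\theta)\bigr]
  =\sum_{j=1}^r\frac{1}{2^r}\sum_{\theta\in\Theta}P_\theta\bigl(\widehat\theta_j\neq\theta_j\bigr),
\]
where we used linearity of expectation and the definition of $\hamming$. The plan is to show that each inner sum over $\theta$, for fixed $j$, is at least $\tfrac12(1-\eta)$. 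Summing over $j$ then gives the claimed bound $\tfrac r2(1-\eta)$.

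Fix $j$. We partition $\Theta$ into the $2^{r-1}$ unordered pairs $\{\theta,\theta^{(j)}\}$, choosing for each pair the representative $\theta$ with $\theta_j=+1$. For such a pair, $\widehat\sigma:=\widehat\theta_j$ is a test with values in $\{-1,+1\}$ between $P_+:=P_\theta$ and $P_-:=P_{\theta^{(j)}}$. The two error events are $\{\widehat\theta_j\neq\theta_j\}$ under $P_\theta$ and $\{\widehat\theta_j\neq\theta^{(j)}_j\}$ under $P_{\theta^{(j)}}$. Here we need the averaged (sum) form of the two-point bound rather than the max form in \eqref{eq:le-cam}. It follows in one line from the definition of total variation: for the event $E=\{\widehat\sigma=-1\}$,
\[
  P_+(E)+P_-(E^c)=1-\bigl(P_-(E)-P_+(E)\bigr)\geq 1-\TV(P_+,P_-)\geq 1-\eta.
\]
So each pair contributes at least $1-\eta$ to $\sum_\theta P_\theta(\widehat\theta_j\neq\theta_j)$. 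With $2^{r-1}$ pairs, the inner average is at least $2^{r-1}(1-\eta)/2^r=\tfrac12(1-\eta)$, as required.

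If $\widehat\theta$ is allowed to be randomized, the same argument applies after conditioning on the internal randomness, or equivalently after replacing each $P_\theta$ by its product with the randomization law. That product leaves total variation distances unchanged.

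The only step needing care is using the \emph{sum} form of Le Cam's bound rather than the stated max form. Using the max form directly loses the averaging over pairs and would not give the clean constant $\tfrac12$. Deriving the sum form from the variational characterization $\TV(P,Q)=\sup_E|P(E)-Q(E)|$, as above, removes this issue. Beyond that, the proof is bookkeeping: the decomposition is exact, and each pair is handled independently.
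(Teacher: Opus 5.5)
Your proof is correct. Averaging over the uniform prior on $\{-1,+1\}^r$, pairing each $\theta$ with $\theta^{(j)}$, and applying the summed two-point bound $P_+(E)+P_-(E^c)\geq 1-\TV(P_+,P_-)$ is essentially the standard argument in the references the paper cites for this lemma (the paper gives no proof of its own), and you are right that using the max form \eqref{eq:le-cam} directly would lose a factor of $2$.
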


\section{Main result}\label{sec:main-result}

Set $\X=\Y=\{0,1\}$. Define
\begin{align*}
  \F&=\{f_0,f_1\}, & f_b(x)&=b,\\
  \G&=\{g_0,g_1\}, & g_b(x)&=bx.
\end{align*}
Thus $\F$ consists of the two constant binary functions, while $\G$ consists of the zero function and the identity function.

\begin{theorem}[Rate separation]\label{thm:main}
There are universal constants $0<c<C<\infty$ such that, for all $n,r\geq 1$,
\begin{align}
  \frac{c}{\sqrt n}
  &\leq \epsagn(n\mid \F^r)
  \leq \frac{C}{\sqrt n},
  \label{eq:F-rate}\\[0.3em]
  c\min\left\{1,\sqrt{\frac rn}\right\}
  &\leq \epsagn(n\mid \G^r)
  \leq C\min\left\{1,\sqrt{\frac rn}\right\}.
  \label{eq:G-rate}
\end{align}
Consequently,
\[
  \epsagn(n\mid \F)\asymp
  \epsagn(n\mid \G)\asymp n^{-1/2},
\]
but the rates of their direct sums differ as $r$ grows.
\end{theorem}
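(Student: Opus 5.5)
The plan is to reduce both classes to simple statistical problems on the label vector. For $\F^r$, every hypothesis ignores $x$: $f_{b_1}\otimes\cdots\otimes f_{b_r}$ is the constant map $x\mapsto b=(b_1,\dots,b_r)\in\{0,1\}^r$. Its loss is $L_D(b)=1-p(b)$, where $p$ is the marginal law of $y$ on $\{0,1\}^r$.

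\emph{Upper bound in \eqref{eq:F-rate}.} I will use the rule that outputs the empirical mode $\widehat b$ of the labels. Its excess risk is at most $2\sup_{b}|\widehat p(b)-p(b)|$. This is a uniform deviation over the class of singleton indicators $\{\one\{y=b\}:b\in\{0,1\}^r\}$, which has VC dimension $1$. The standard VC uniform-convergence bound (e.g.\ \cite[Chapter~12]{DevroyeGyorfiLugosi1996}) then gives $\E\sup_b|\widehat p(b)-p(b)|\le K/\sqrt n$ with no dependence on $r$. This is the one point where the naive bound \eqref{eq:finite-class}, which would give $\sqrt{r/n}$, must be avoided. I expect it to be the main subtlety, though it is handled by the VC-dimension-one observation.

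\emph{Lower bound in \eqref{eq:F-rate}.} I will embed the $r=1$ problem. Take $y_2=\cdots=y_r=0$ deterministically and $y_1\sim\Ber\bigl(\tfrac12+\sigma\delta\bigr)$ with $\delta=\min\{1/4,1/\sqrt n\}$ and $\sigma\in\{-1,+1\}$. Any output $h$ incurs excess risk at least $2\delta$ on the event that its prediction on the first coordinate (after projecting to $\{0,\dots\}$) disagrees with $\sigma$; outputs with nonzero later coordinates are only worse. From the learner I define a test $\widehat\sigma$, namely whether $h$ predicts first coordinate $1$ (taken in the majority sense over $x$, or at a fixed $x$, since $x$ can be taken deterministic). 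Pinsker \eqref{eq:pinsker} gives $\TV\le\sqrt{n\cdot O(\delta^2)}\le 1/2$ for a suitable constant, so Le Cam \eqref{eq:le-cam} yields excess risk $\gtrsim\delta\gtrsim 1/\sqrt n$.

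\emph{Upper bound in \eqref{eq:G-rate}.} Since $|\G^r|=2^r$, ERM together with \eqref{eq:finite-class} gives $K\min\{1,\sqrt{r\log 2/n}\}$ directly.

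\emph{Lower bound in \eqref{eq:G-rate}, via Lemma~\ref{lem:assouad}.} Let $x$ be uniform on the basis vectors $e_1,\dots,e_r$. Given $x=e_j$, set $y=e_j$ with probability $\tfrac12(1+\theta_j\delta)$ and $y=0$ otherwise, for $\theta\in\{-1,+1\}^r$. The best hypothesis is $g_{\theta}$, with $g_{(\theta_j+1)/2}$ on coordinate $j$. For an arbitrary (possibly improper) learner output $h$, define $\widehat\theta_j=+1$ iff $h(e_j)=e_j$. Any prediction at $e_j$ outside $\{0,e_j\}$ errs with probability $1$, hence is worse still. The excess risk is therefore at least $\frac{\delta}{r}\hamming(\widehat\theta,\theta)$.

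Flipping $\theta_j$ changes only samples with $x=e_j$, so $\KL(P_\theta^n\|P_{\theta^{(j)}}^n)\le \frac nr\cdot O(\delta^2)$. I will choose $\delta=c_0\min\{1,\sqrt{r/n}\}$ with $c_0$ small, so that Pinsker gives $\eta\le1/2$. Assouad then yields a worst-case excess risk of at least $\frac{\delta}{r}\cdot\frac r4=\delta/4\asymp\min\{1,\sqrt{r/n}\}$. The "consequently" clause follows by setting $r=1$.
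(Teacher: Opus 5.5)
Your proposal is correct, and three of its four parts follow the paper:
\begin{itemize}
\item ERM with the finite-class bound \eqref{eq:finite-class} gives the upper bound in \eqref{eq:G-rate}.
\item A Le Cam two-point reduction gives the lower bound in \eqref{eq:F-rate}. Your labels $0^r$ versus $e_1$ in place of the paper's $0^r$ versus $1^r$ change nothing.
\item The same construction, uniform on basis vectors, with Assouad's lemma gives the lower bound in \eqref{eq:G-rate}.
\end{itemize}

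The genuine difference is the $r$-free upper bound in \eqref{eq:F-rate}. You bound $\E\sup_b|\widehat p(b)-p(b)|$ by uniform convergence over singleton indicators, a class of VC dimension one. The paper instead uses $\|\widehat p-p\|_\infty\le\|\widehat p-p\|_2$, Jensen, and the exact identity $\E\|\widehat p-p\|_2^2=(1-\sum_u p_u^2)/n\le 1/n$. That gives $2/\sqrt n$ with no external theorem.

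Your route shows structurally why $r$ drops out, and it would extend to other loss classes of bounded VC dimension. However, it needs the log-free VC bound $\E\sup\le K\sqrt{d/n}$, obtained by chaining plus Haussler's packing bound. The shatter-coefficient bound of the kind in \cite[Chapter~12]{DevroyeGyorfiLugosi1996} only gives $O(\sqrt{\log s(n)/n})$. Singletons pick out $n+1$ subsets of an $n$-point sample, so citing it as you do yields $\sqrt{\log n/n}$, which is too weak for \eqref{eq:F-rate}.

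Finally, in your $\F^r$ lower bound, taking $\delta=\min\{1/4,1/\sqrt n\}$ makes the product KL of order one but not small. You must shrink $\delta$ by a small absolute constant before Pinsker gives $\TV\le 1/2$, as your ``suitable constant'' remark anticipates.
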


The single-instance statement follows by taking $r=1$ in~\eqref{eq:F-rate} and~\eqref{eq:G-rate}. The main assertion is that the two rates cease to agree after taking direct sums. We prove~\eqref{eq:F-rate} and~\eqref{eq:G-rate} in the next two sections.

\subsection{Direct sums of the constant class}

A hypothesis in $\F^r$ is indexed by $u=(u_1,\ldots,u_r)\in\{0,1\}^r$ and predicts $u$ at every input. Write this hypothesis as $h_u$. For a distribution $D$ on $\X^r\times\Y^r$, define
\begin{equation}\label{eq:empirical-distribution}
  p_u=\Prb_D(Y=u),
  \qquad
  \widehat p_u=\frac1n\sum_{i=1}^n\one\{Y_i=u\}.
\end{equation}
Let
\[
  u^\star\in\argmax_{u\in\{0,1\}^r}p_u,
  \qquad
  \widehat u\in\argmax_{u\in\{0,1\}^r}\widehat p_u.
\]
The empirical risk minimizer over $\F^r$ is the constant hypothesis $h_{\widehat u}$.

\begin{proposition}\label{prop:F}
For all $n,r\geq 1$,
\[
  \epsagn(n\mid \F^r)\asymp \frac1{\sqrt n},
\]
where the implicit constants are universal.
\end{proposition}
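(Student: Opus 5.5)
The plan is to prove the upper bound by analyzing the empirical risk minimizer $h_{\widehat u}$ directly, and the lower bound by a two-point argument that already works with $r=1$ embedded in $\F^r$. The crude bound~\eqref{eq:finite-class} with $|\F^r|=2^r$ only gives $\sqrt{r/n}$, so the upper bound must avoid a union bound over the $2^r$ hypotheses.

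\emph{Upper bound.} Since $h_u$ predicts $u$ at every input, $L_D(h_u)=1-p_u$. Hence the excess risk of ERM is
\[
  L_D(h_{\widehat u})-L_D(\F^r)=p_{u^\star}-p_{\widehat u}
  =(p_{u^\star}-\widehat p_{u^\star})+(\widehat p_{u^\star}-\widehat p_{\widehat u})+(\widehat p_{\widehat u}-p_{\widehat u}).
\]
The middle term is nonpositive by the choice of $\widehat u$. Therefore the excess risk is at most $2\sup_{u}|\widehat p_u-p_u|$. The key observation is that this supremum is a uniform deviation over the family of events $\{Y=u\}$, $u\in\{0,1\}^r$. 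This is the class of singletons of $\Y^r$, which has VC dimension $1$ regardless of $r$. The standard VC uniform convergence bound (e.g.\ \cite[Chapter~12]{DevroyeGyorfiLugosi1996}) then gives
\[
  \E\sup_u|\widehat p_u-p_u|\le K'/\sqrt n
\]
with $K'$ universal. This yields $\epsagn(n\mid\F^r)\le C/\sqrt n$. If one prefers an elementary route, the DKW inequality applied to the $\Y^r$-valued label under any fixed linear ordering of $\{0,1\}^r$ also works. Each singleton probability is a difference of two CDF values, so the deviation is at most twice the Kolmogorov distance. This step is the main point of the proof, and I expect it to be the only nonroutine one.

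\emph{Lower bound.} Fix $x=(0,\dots,0)$ deterministically. For $\sigma\in\{-1,+1\}$, let $D_\sigma$ make $Y_2=\dots=Y_r=0$ and $Y_1\sim\Ber(\tfrac12+\sigma\delta)$, with $\delta=\tfrac{1}{4\sqrt n}$. The only input point is $x$, so any learner's output $A(S)$, inside or outside the class, is described by the single vector $A(S)(x)\in\{0,1\}^r$.

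Under $D_\sigma$ the best risk is $\tfrac12-\delta$. Any predicted vector whose first coordinate disagrees with the majority value $\one\{\sigma=+1\}$ has risk at least $\tfrac12+\delta$, so its excess risk is at least $2\delta$. Define the test $\widehat\sigma=+1$ iff the first coordinate of $A(S)(x)$ equals $1$. Then
\[
  \sup_\sigma\bigl(L_{D_\sigma,n}(A)-L_{D_\sigma}(\F^r)\bigr)\ge 2\delta\max_\sigma P_\sigma(\widehat\sigma\neq\sigma).
\]
By~\eqref{eq:le-cam} this is at least $2\delta\cdot\frac{1-\TV(D_+^n,D_-^n)}2$.

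It remains to bound the total variation. Tensorization gives $\KL(D_+^n\|D_-^n)=n\,\KL(\Ber(\tfrac12+\delta)\|\Ber(\tfrac12-\delta))\le c_0 n\delta^2$ for $\delta\le\tfrac14$. By~\eqref{eq:pinsker}, $\TV\le\sqrt{c_0/32}<1/2$ for the chosen $\delta$. This gives $\epsagn(n\mid\F^r)\ge c/\sqrt n$ uniformly in $r$.
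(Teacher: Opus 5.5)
Your proof is correct, and your lower bound is essentially the paper's. The paper also fixes a single input and runs Le Cam's two-point argument, using labels $0^r$ versus $1^r$ where you use $0^r$ versus $e_1$; your version is more explicit about the test and about improper outputs.

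The genuine difference is in the upper bound. After the same reduction to $2\max_u|\widehat p_u-p_u|$, you control the maximal deviation by uniform convergence over singletons (VC dimension $1$), or by DKW under a linear order on $\{0,1\}^r$. The paper instead bounds $\|\widehat p-p\|_\infty\le\|\widehat p-p\|_2$. It then applies Jensen with the multinomial variance identity $\E\|\widehat p-p\|_2^2=\sum_u p_u(1-p_u)/n=(1-\sum_u p_u^2)/n\le 1/n$, which gives the explicit bound $2/\sqrt n$ in a few lines. So the step you flagged as the only nonroutine one is elementary: independence from $r$ comes for free because the $p_u$ sum to one, so the $\ell_2$ norm absorbs all $2^r$ coordinates at once. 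Your route uses more machinery but rests on a more general principle (uniform deviations over a VC-dimension-one family). Your DKW variant is self-contained and valid.

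There are two small cautions. First, a shatter-coefficient expectation bound of the form $2\sqrt{\log(2S(\mathcal A,n))/n}$ only gives $\sqrt{\log(2(n+1))/n}$ for singletons. To get a log-free $K'/\sqrt n$ you need a chaining bound or your DKW alternative. Second, in the lower bound, the claim $\TV\le\sqrt{c_0/32}<1/2$ would require $c_0<8$. That is impossible, because $\KL(\Ber(\tfrac12+\delta)\|\Ber(\tfrac12-\delta))=2\delta\log\frac{1+2\delta}{1-2\delta}>8\delta^2$. The slip is harmless, since any constant bound on $\TV$ below $1$ suffices. For example, $c_0=32/3$ is valid for $\delta\le\tfrac14$ and gives $\TV\le 1/\sqrt3$, hence a lower bound of $\delta(1-1/\sqrt3)$.
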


\begin{proof}
Since $L_D(h_u)=1-p_u$,
\[
  L_D(h_{\widehat u})-L_D(\F^r)
  =p_{u^\star}-p_{\widehat u}.
\]
The definition of $\widehat u$ gives
\begin{align*}
  p_{u^\star}-p_{\widehat u}
  &\leq
  (p_{u^\star}-\widehat p_{u^\star})
  +(\widehat p_{\widehat u}-p_{\widehat u})\\
  &\leq 2\|\widehat p-p\|_\infty
  \leq 2\|\widehat p-p\|_2.
\end{align*}
Consequently, by Jensen's inequality,
\begin{align*}
  \E\bigl[p_{u^\star}-p_{\widehat u}\bigr]
  &\leq 2\sqrt{\E\|\widehat p-p\|_2^2}\\
  &=2\sqrt{\sum_u\operatorname{Var}(\widehat p_u)}\\
  &=2\sqrt{\frac{1-\sum_u p_u^2}{n}}
  \leq \frac{2}{\sqrt n}.
\end{align*}
This proves the upper bound uniformly over $D$ and $r$.

For the lower bound, fix an input $x_0\in\X^r$ and restrict the label distribution to $0^r$ and $1^r$, with probabilities $1/2+\alpha$ and $1/2-\alpha$. Distinguishing which label is more likely is the standard two-point Bernoulli problem. Le Cam's method with $\alpha$ of order $n^{-1/2}$ gives an expected excess risk of order $n^{-1/2}$; see~\cite[Chapter~2]{Tsybakov2009}. 
\end{proof}

\subsection{Direct sums of the zero and identity functions}\label{sec:G}

For $b=(b_1,\ldots,b_r)\in\{0,1\}^r$, write
\[
  h_b=g_{b_1}\otimes\cdots\otimes g_{b_r}\in\G^r.
\]
Because $g_b(x)=bx$,
\begin{equation}\label{eq:product-prediction}
  h_b(x_1,\ldots,x_r)=(b_1x_1,\ldots,b_rx_r).
\end{equation}
Let $e_j$ be the $j$th standard basis vector in $\{0,1\}^r$. Then
\begin{equation}\label{eq:basis-prediction}
  h_b(e_j)=
  \begin{cases}
    0^r,& b_j=0,\\
    e_j,& b_j=1.
  \end{cases}
\end{equation}

The upper bound in~\eqref{eq:G-rate} follows from the finite-class estimate. Since $|\G^r|=2^r$, equation~\eqref{eq:finite-class} gives
\begin{equation}\label{eq:G-upper}
  \epsagn(n\mid\G^r)
  \leq K\min\left\{1,\sqrt{\frac{r\log 2}{n}}\right\}.
\end{equation}

For the lower bound, fix a learning rule $A$ and a number $0<\alpha\leq 1/4$. For every $\theta=(\theta_1,\ldots,\theta_r)\in\{-1,+1\}^r$, define a distribution $D_\theta$ on $\X^r\times\Y^r$ as follows. Choose $J$ uniformly from $\{1,\ldots,r\}$, set $X=e_J$, and, conditional on $J=j$, set
\begin{equation}\label{eq:G-distributions}
  Y=
  \begin{cases}
    e_j,&\text{with probability }\frac12+\theta_j\alpha,\\
    0^r,&\text{with probability }\frac12-\theta_j\alpha.
  \end{cases}
\end{equation}
For a fixed $j$, the more likely label at $e_j$ is $e_j$ when $\theta_j=+1$ and $0^r$ when $\theta_j=-1$. Define
\[
  b_j^\star(\theta)=\frac{1+\theta_j}{2}.
\]
By~\eqref{eq:basis-prediction}, $h_{b^\star(\theta)}$ predicts the more likely label at every $e_j$. Therefore
\begin{equation}\label{eq:G-optimal-risk}
  L_{D_\theta}(\G^r)=\frac12-\alpha.
\end{equation}

Let $S\sim D_\theta^n$. From the output of $A$ define $\widehat\theta_A(S)\in\{-1,+1\}^r$ by
\begin{equation}\label{eq:theta-hat}
  \widehat\theta_{A,j}(S)=
  \begin{cases}
    +1,&A(S)(e_j)=e_j,\\
    -1,&A(S)(e_j)\neq e_j.
  \end{cases}
\end{equation}
We now relate errors in $\widehat\theta_A$ to excess risk. Fix $j$. If $\theta_j=+1$ but $\widehat\theta_{A,j}=-1$, then the learner does not predict the more likely label $e_j$. Predicting $0^r$ has conditional error $1/2+\alpha$, and any other prediction has conditional error one. If $\theta_j=-1$ but $\widehat\theta_{A,j}=+1$, then the learner predicts $e_j$ although $0^r$ is more likely, and its conditional error is $1/2+\alpha$. In either case, the conditional excess over the optimal error $1/2-\alpha$ is at least $2\alpha$. Since $X=e_j$ with probability $1/r$, each incorrect coordinate contributes at least $2\alpha/r$ to the population excess risk. Hence, for every $\theta$ and every sample $S$,
\begin{equation}\label{eq:risk-hamming}
  L_{D_\theta}(A(S))-L_{D_\theta}(\G^r)
  \geq \frac{2\alpha}{r}\,
  \hamming\bigl(\widehat\theta_A(S),\theta\bigr).
\end{equation}
This inequality also covers improper learning rules, since predictions outside $\{0^r,e_j\}$ have conditional error one.

It remains to bound the accuracy with which $\theta$ can be estimated. Let $\theta^{(j)}$ be obtained by changing only the sign of $\theta_j$. The distributions $D_\theta$ and $D_{\theta^{(j)}}$ agree unless $X=e_j$, an event of probability $1/r$. Conditional on this event, the probability of the label $e_j$ changes from $1/2+\alpha$ to $1/2-\alpha$, or conversely. Therefore
\begin{align}
  \KL(D_\theta\|D_{\theta^{(j)}})
  &=\frac1r\,
  \KL\left(
    \Ber\left(\frac12+\alpha\right)
    \middle\|
    \Ber\left(\frac12-\alpha\right)
  \right)\notag\\
  &=\frac{2\alpha}{r}
  \log\left(\frac{1+2\alpha}{1-2\alpha}\right)
  \leq \frac{16\alpha^2}{r}.
  \label{eq:adjacent-kl}
\end{align}
By independence,
\begin{equation}\label{eq:product-kl}
  \KL(D_\theta^n\|D_{\theta^{(j)}}^n)
  \leq \frac{16n\alpha^2}{r}.
\end{equation}
Choose
\begin{equation}\label{eq:alpha}
  \alpha=\frac1{16}
  \min\left\{1,\sqrt{\frac rn}\right\}.
\end{equation}
Then the right-hand side of~\eqref{eq:product-kl} is at most $1/16$. Pinsker's inequality gives
\begin{equation}\label{eq:adjacent-tv}
  \TV(D_\theta^n,D_{\theta^{(j)}}^n)
  \leq \sqrt{\frac1{32}}<\frac14
\end{equation}
for every $\theta$ and $j$. Applying Assouad's lemma to the family $\{D_\theta^n\}$,
\begin{equation}\label{eq:hamming-lower}
  \sup_{\theta\in\{-1,+1\}^r}
  \E_{S\sim D_\theta^n}
  \bigl[\hamming(\widehat\theta_A(S),\theta)\bigr]
  \geq \frac{3r}{8}.
\end{equation}
Combining~\eqref{eq:risk-hamming} and~\eqref{eq:hamming-lower},
\begin{align*}
  \sup_D\bigl(L_{D,n}(A)-L_D(\G^r)\bigr)
  &\geq
  \sup_{\theta\in\{-1,+1\}^r}
  \E_{S\sim D_\theta^n}
  \bigl[L_{D_\theta}(A(S))-L_{D_\theta}(\G^r)\bigr]\\
  &\geq \frac{2\alpha}{r}\cdot\frac{3r}{8}
  =\frac{3\alpha}{4}.
\end{align*}
Since $A$ was arbitrary, taking the infimum over $A$ and substituting~\eqref{eq:alpha} proves the lower bound in~\eqref{eq:G-rate}. Together with~\eqref{eq:G-upper}, this proves~\eqref{eq:G-rate} and completes the proof of Theorem~\ref{thm:main}.

\section{Conclusion}

The classes $\F$ and $\G$ have the same single-instance agnostic learning rate, but their direct sums do not. When $1\leq r\leq n$, the rate for $\G^r$ is larger than the rate for $\F^r$ by a factor of order $\sqrt r$. When $r\geq n$, the minimax excess risk for $\G^r$ is bounded below by a positive constant, while the rate for $\F^r$ remains of order $n^{-1/2}$. Therefore the asymptotic order of $\epsagn(n\mid C)$ and the value of $r$ are not sufficient to determine the asymptotic order of $\epsagn(n\mid C^r)$ for an arbitrary concept class $C$.

\bibliographystyle{alpha}
\bibliography{references}

\end{document}